\documentclass{article}
\usepackage{arxiv}
\usepackage{amsmath,amssymb,amsfonts,amsthm}
\usepackage{natbib}
\usepackage{booktabs}
\usepackage{graphicx}
\usepackage{hyperref}
\usepackage{url}
\usepackage{xcolor}
\usepackage{listings}

\newtheorem{proposition}{Proposition}
\newtheorem{theorem}{Theorem}
\newtheorem{corollary}{Corollary}

\title{Zoom Consistency: A Free Confidence Signal in Multi-Step Visual Grounding Pipelines}

\author{
  Keon Kim \\
  Om Labs \\
  \texttt{keon@omlabs.xyz} \\
  \And
  Krish Chelikavada \\
  Om Labs \\
  \texttt{krish@omlabs.xyz} \\
}

\begin{document}

\maketitle

\begin{abstract}
Multi-step zoom-in pipelines are widely used for GUI grounding, yet the intermediate predictions they produce are typically discarded after coordinate remapping. We observe that these intermediate outputs contain a useful confidence signal for free: \emph{zoom consistency}, the distance between a model's step-2 prediction and the crop center. Unlike log-probabilities or token-level uncertainty, zoom consistency is a geometric quantity in a shared coordinate space, making it directly comparable across architecturally different VLMs without calibration. We prove this quantity is a linear estimator of step-1 spatial error under idealized conditions (perfect step-2, target within crop) and show it correlates with prediction correctness across two VLMs (AUC $= 0.60$; Spearman $\rho = -0.14$, $p < 10^{-6}$ for KV-Ground-8B; $\rho = -0.11$, $p = 0.0003$ for Qwen3.5-27B). The correlation is small but consistent across models, application categories, and operating systems. As a proof-of-concept, we use zoom consistency to route between a specialist and generalist model, capturing 16.5\% of the oracle headroom between them (+0.8\%, McNemar $p = 0.19$). Code is available at \url{https://github.com/omxyz/zoom-consistency-routing}.
\end{abstract}

\section{Introduction}

Multi-step zoom-in pipelines are widely used for high-resolution GUI grounding~\cite{zoomclick,laser,adazoomgui}. In a typical 2-step pipeline, a VLM predicts an initial click location on the full screenshot, a crop is extracted around that prediction and resized, and the VLM re-predicts on the zoomed view. The step-2 prediction is remapped to original coordinates, and the step-1 prediction is discarded.

We observe that the relationship between the step-2 prediction and the crop center encodes useful information about prediction quality. If step-1 was accurate, the target element is near the crop center, so step-2 should predict near the center. If step-1 was off, the target is displaced from the center, and step-2 must predict away from center to compensate. This displacement, which we call \emph{zoom consistency}, is a free confidence signal: it requires no extra computation, no training, and no access to model internals.

This paper formalizes zoom consistency, characterizes its properties, and validates it empirically. Our contributions:

\begin{enumerate}
    \item We define zoom consistency and prove it is a linear estimator of step-1 spatial error under idealized conditions (Proposition~\ref{prop:linear}). We characterize three failure modes where the linear relationship breaks.
    \item We show that zoom consistency is directly comparable across architecturally different VLMs without calibration (Proposition~\ref{prop:calibration}), a property that log-probabilities and token-level uncertainty scores lack.
    \item We validate the signal on 1,581 samples from ScreenSpot-Pro across two models (KV-Ground-8B and Qwen3.5-27B), showing significant negative correlation with accuracy ($p < 10^{-6}$).
    \item As a proof-of-concept application, we use zoom consistency for cross-model routing, deriving the exact condition under which such routing improves over the stronger model alone (Theorem~\ref{thm:routing}).
\end{enumerate}

\section{Related Work}

\paragraph{Zoom-In Pipelines.}
ZoomClick~\cite{zoomclick} systematically evaluated zoom refinement for GUI grounding. LASER~\cite{laser} introduced preference optimization for zoom region selection. ScreenSeekeR, introduced in \citet{li2025screenspotpro}, uses cascaded visual search with iterative cropping. AdaZoom-GUI~\cite{adazoomgui} proposed adaptive zoom with instruction refinement. Adaptive Chain-of-Focus~\cite{adaptivecof} dynamically decides when to zoom. MEGA-GUI~\cite{megagui} proposed multi-stage coarse-to-fine grounding. These works treat zoom as a refinement mechanism; none analyze the intermediate predictions as a confidence signal.

\paragraph{Confidence Estimation for VLMs.}
GUI-RC~\cite{guirc} introduced region consistency via spatial voting across multiple sampled predictions, requiring $K$ additional forward passes per sample. Training-Free Uncertainty Guidance~\cite{uncertaintyguidance} uses model uncertainty for visual input selection via repeated sampling. Self-consistency~\cite{selfconsistency} aggregates multiple chain-of-thought samples, designed for discrete answers rather than continuous coordinates. Adaptive VLM Routing~\cite{avr} routes between VLMs using a trained difficulty estimator learned on labeled data. These methods either require extra forward passes, access to model internals, or training. Zoom consistency requires none of these.

\paragraph{GUI Grounding Models.}
SeeClick~\cite{seeclick} established early GUI grounding benchmarks. OS-Atlas~\cite{osatlas} and UGround~\cite{uground} provided large-scale training data. GUI-Actor~\cite{guiactor} introduced coordinate-free grounding. UI-TARS~\cite{uitars} scaled to 72B parameters. GUI-GRPO~\cite{guigrpo} applied reinforcement learning for grounding. KV-Ground-8B~\cite{kvground} achieved state-of-the-art results through multi-stage fine-tuning from GUI-Owl-1.5~\cite{guiowl}.

\section{Zoom Consistency}

\subsection{Background: 2-Step Zoom-In Pipeline}

Given a screenshot $I$ of size $W \times H$ and instruction $q$, a VLM predicts $\hat{p}_1 = (x_1, y_1)$ in a normalized $1000\times1000$ coordinate space. A crop of side $r \cdot 1000$ centered at $\hat{p}_1$ is extracted and resized back to $1000\times1000$, and the VLM predicts $\hat{p}_2 = (x_2, y_2)$ in the crop's coordinate space. The coordinate mapping from original to crop space is $\phi(\mathbf{q}) = (\mathbf{q} - \hat{p}_1)/r + \mathbf{m}$, where $\mathbf{m} = (500, 500)$ is the crop center. The final prediction is obtained by inverting this mapping on $\hat{p}_2$.

\subsection{Definition and Properties}

We define \emph{zoom consistency} as:
\begin{equation}
    c = \|\hat{p}_2 - \mathbf{m}\|_2 = \sqrt{(x_2 - 500)^2 + (y_2 - 500)^2}
\end{equation}

The quantity $c$ measures how far the step-2 prediction deviates from the center of the zoomed crop. Intuitively, if the step-1 prediction was perfect, the target sits at the crop center, so step-2 should also predict the center ($c \approx 0$).

\begin{proposition}[Linear Error Estimator]\label{prop:linear}
Let $\mathbf{t}$ be the true target and $\hat{p}_1 = \mathbf{t} + \boldsymbol{\varepsilon}_1$ be the step-1 prediction with error $\boldsymbol{\varepsilon}_1$. If the target lies within the crop and step-2 perfectly identifies the target ($\hat{p}_2 = \phi(\mathbf{t})$), then:
\begin{equation}
    c = \frac{\|\boldsymbol{\varepsilon}_1\|_2}{r}
\end{equation}
\end{proposition}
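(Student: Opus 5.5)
The plan is a direct substitution into the coordinate map $\phi$ defined in the background subsection. No earlier result is needed beyond the definition of $\phi$ and of $c$. Under the hypothesis $\hat{p}_2 = \phi(\mathbf{t})$, I will compute $\hat{p}_2 - \mathbf{m}$ explicitly and then take its Euclidean norm.

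Concretely, the first step writes $\phi(\mathbf{t}) = (\mathbf{t} - \hat{p}_1)/r + \mathbf{m}$. I then substitute $\hat{p}_1 = \mathbf{t} + \boldsymbol{\varepsilon}_1$, which gives $\mathbf{t} - \hat{p}_1 = -\boldsymbol{\varepsilon}_1$. Hence $\hat{p}_2 - \mathbf{m} = -\boldsymbol{\varepsilon}_1/r$. Taking the $\ell_2$ norm and using absolute homogeneity, $\|-\boldsymbol{\varepsilon}_1/r\|_2 = \|\boldsymbol{\varepsilon}_1\|_2/|r|$. Since $r>0$ is a crop ratio, $|r| = r$, and the definition $c = \|\hat{p}_2 - \mathbf{m}\|_2$ then yields $c = \|\boldsymbol{\varepsilon}_1\|_2/r$.

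The only point needing care is the role of the hypothesis that the target lies within the crop. I will note explicitly that this is what makes $\phi(\mathbf{t})$ a valid, attainable step-2 output. Its coordinates then lie in $[0,1000]^2$, so no clipping or boundary truncation by the model or the crop occurs, and the affine formula for $\phi$ applies without modification. This condition is equivalent to $\|\boldsymbol{\varepsilon}_1\|_\infty \le 500r$.

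I also need to check that the crop is really centered at $\hat{p}_1$, rather than shifted to stay inside the image near screen edges. If the implementation shifts the crop, $\mathbf{m}$ would no longer be the image of $\hat{p}_1$. I will state this as part of the idealized setting: the crop center corresponds exactly to $\hat{p}_1$.

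I do not expect any real obstacle, since the argument is a one-line algebraic identity. The main thing is to make these two implicit idealizations explicit: no boundary clipping, and a crop centered exactly at $\hat{p}_1$. The three failure modes mentioned in the contributions can then be read off as violations of these assumptions or of perfect step-2 prediction.
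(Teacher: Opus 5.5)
Your proposal is correct and matches the paper's proof: substituting $\hat{p}_1 = \mathbf{t} + \boldsymbol{\varepsilon}_1$ into $\phi(\mathbf{t})$ gives $\hat{p}_2 - \mathbf{m} = -\boldsymbol{\varepsilon}_1/r$, and taking the norm (with $r>0$) yields $c = \|\boldsymbol{\varepsilon}_1\|_2/r$. The idealizations you add are reasonable and the paper leaves them implicit (the crop centered exactly at $\hat{p}_1$, no boundary clipping, and $\|\boldsymbol{\varepsilon}_1\|_\infty \le 500r$), although the algebraic identity itself uses only the hypothesis $\hat{p}_2 = \phi(\mathbf{t})$.
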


\begin{proof}
Under perfect step-2, $\hat{p}_2 = \phi(\mathbf{t}) = (\mathbf{t} - \hat{p}_1)/r + \mathbf{m} = -\boldsymbol{\varepsilon}_1/r + \mathbf{m}$. Therefore $\hat{p}_2 - \mathbf{m} = -\boldsymbol{\varepsilon}_1/r$ and $c = \|\boldsymbol{\varepsilon}_1\|/r$.
\end{proof}

For a crop ratio $r = 0.5$, the zoom magnifies step-1 error by $2\times$: a consistency of $c = 100$ corresponds to step-1 error of 50 pixels (5\% of image width). The linear relationship provides a direct, interpretable readout of localization quality. Note that the step-1-to-final coordinate shift $\|\hat{p}_1 - \text{final}\| = r \cdot c$ is a linear rescaling of zoom consistency, so the two signals rank identically; $c$ is the natural form because it does not depend on $r$.

\paragraph{Failure modes.} The linear relationship breaks in three cases:
\begin{enumerate}
    \item \textbf{Target outside crop:} When $\|\boldsymbol{\varepsilon}_1\|_\infty > r \cdot 500$, the target is not visible in the zoomed view, so step-2 cannot correct toward it. The consistency value $c$ becomes uninformative.
    \item \textbf{Step-2 error:} If step-2 has its own prediction error $\boldsymbol{\varepsilon}_2$, the observed consistency becomes $c = \|{-}\boldsymbol{\varepsilon}_1/r + \boldsymbol{\varepsilon}_2\|$, bounded by $|c - \|\boldsymbol{\varepsilon}_1\|/r| \leq \|\boldsymbol{\varepsilon}_2\|$. This adds noise proportional to step-2's own error.
    \item \textbf{Boundary clipping:} When the crop extends beyond image boundaries, it is clipped, shifting the effective center. The mapping $\phi$ no longer holds exactly.
\end{enumerate}

\subsection{Cross-Model Comparability}

\begin{proposition}[Calibration-Free Comparison]\label{prop:calibration}
Let $M_A$ and $M_B$ be VLMs with different tokenizers $\mathcal{T}_A, \mathcal{T}_B$. Their zoom consistency values $c_A, c_B \in [0, 500\sqrt{2}]$ are directly comparable. In contrast, log-probability scores $\ell_A, \ell_B$ depend on token count ($L_A \neq L_B$ for identical coordinate strings), softmax temperature, and vocabulary size, requiring per-model calibration functions $g_i: \mathbb{R} \to [0,1]$ learned on held-out data.
\end{proposition}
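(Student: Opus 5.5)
The plan is to split the statement into its two halves and treat each as a short structural argument rather than a calculation. For the positive half, the key observation is that $c$ is computed in a coordinate system fixed by the pipeline, not by the model. Every model is queried on the same $1000\times1000$ normalized canvas, and the crop is resized back to the same canvas. Also, $\mathbf{m}=(500,500)$ is pipeline-defined. So $c_A$ and $c_B$ are values of one fixed function $\hat p_2 \mapsto \|\hat p_2-\mathbf{m}\|_2$ applied to points in one space $[0,1000]^2$.

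First I will show the range claim. For any $\hat p_2\in[0,1000]^2$, each coordinate satisfies $|x_2-500|\le 500$ and $|y_2-500|\le 500$, so $c\le 500\sqrt2$, with equality attained at the corners.

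Second, I will make ``directly comparable'' precise. Under Proposition~\ref{prop:linear}, $c_i=\|\boldsymbol{\varepsilon}_{1,i}\|_2/r$ for each model $i\in\{A,B\}$, with the same $r$ for both. Hence $c_A<c_B$ if and only if $\|\boldsymbol{\varepsilon}_{1,A}\|<\|\boldsymbol{\varepsilon}_{1,B}\|$. The comparison therefore has a model-independent meaning in pixels of step-1 error, and no per-model map $g_i$ is needed: the identity is a valid calibration for both models. Outside the idealized regime, I will note via the step-2 error bound in the failure modes that $|c_i-\|\boldsymbol{\varepsilon}_{1,i}\|/r|\le\|\boldsymbol{\varepsilon}_{2,i}\|$. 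Comparability then holds up to that additive slack, which is again measured in shared units.

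For the negative half, I will argue by exhibiting dependence rather than proving impossibility. A sequence log-probability is $\ell_i=\sum_{k=1}^{L_i}\log p_i(\tau_k\mid\tau_{<k})$. Here the number of terms $L_i$ is the length of the tokenization of the same coordinate string under $\mathcal{T}_i$. The tokenization may split ``(512, 347)'' into different numbers of pieces, for example digit-level versus multi-digit tokens. Each term also depends on the model's logit scale, or temperature $T$, through $\mathrm{softmax}(z/T)$, and on the vocabulary over which the mass is spread.

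I will then give a simple two-model example in which both models place the same correct coordinate with certainty-equivalent confidence. Because $L_A\ne L_B$, or because $T_A\ne T_B$, their $\ell$ values nevertheless differ, so the raw ordering of $\ell_A$ versus $\ell_B$ carries no consistent meaning about correctness. Recovering a comparable scale thus requires a monotone map $g_i$ per model, fit on held-out labeled data, as in Platt or isotonic calibration.

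The main obstacle is that ``directly comparable'' and ``requiring calibration'' are informal. The honest content is the contrast between a quantity in shared geometric units and one whose scale is model-dependent. I would state the precise version explicitly: under Proposition~\ref{prop:linear}, there exists a model-independent monotone map from $c$ to step-1 error. For $\ell$, no such map exists in general, as witnessed by the counterexample. Beyond that precise version, I would claim no more.
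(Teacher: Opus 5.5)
Your positive half is essentially the paper's justification. That justification is only the remark following the proposition: $c$ is a Euclidean distance in the shared $1000\times1000$ frame, it depends on each model only through its decoded coordinates, and it is comparable as long as both pipelines use the same $r$ and normalization. The paper does not route this through Proposition~\ref{prop:linear}. It also gives no argument at all for the log-probability half, which it simply asserts. Your detour through Proposition~\ref{prop:linear} gives $c_A<c_B$ a concrete meaning, namely an ordering of step-1 errors. But that meaning holds only in the idealized regime, plus the slack $\|\boldsymbol{\varepsilon}_{2,i}\|$. Under those hypotheses both final predictions equal $\mathbf{t}$, so the meaning says nothing about any cross-model decision. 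The paper's units-only reading is weaker, but it needs no idealization.

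The negative half, however, does not work as written. The summed score $\ell_i=\sum_{k\le L_i}\log p_i(\tau_k\mid\tau_{<k})$ is exactly the log of the probability the model assigns to its token sequence. So if you hold the two models' confidence in the coordinate string fixed, then $\ell_A=\ell_B$ whatever $L_A$ and $L_B$ are; literal certainty gives $\ell_A=\ell_B=0$. The same objection applies to $T_A\neq T_B$ once probability, rather than logits, is held fixed. A token-count effect appears only if you fix per-token confidence or length-normalize.

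There is also a criterion mismatch. Your positive claim concerns step-1 error, while your negative claim concerns correctness. Yet $c$ is not correctness-calibrated across models either: if step 1 lands squarely on a wrong element and step 2 confirms it, you get $c\approx 0$ and a wrong answer.

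A repair that uses one criterion for both halves is an invariance argument. Replace $M_B$ by the same model with logits divided by $T>0$. Greedy decoding is unchanged, so every $\hat p_1$, $\hat p_2$, $c_B$ and correctness label is unchanged. Yet $\ell_B\to 0$ as $T\to 0$ (absent ties), while $\ell_A<0$ on every input. On a finite benchmark you can therefore make a raw-$\ell$ router select $B$ everywhere without altering anything observable about $B$, whereas the $c$-router is unaffected. That is the honest sense in which $\ell$ needs a per-model map $g_i$ and $c$ does not.
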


This property arises because zoom consistency is a geometric quantity in the shared $1000\times1000$ coordinate space that both models output into. The comparison $c_A < c_B$ is a statement about Euclidean distances, independent of how each model internally represents coordinates. This assumes both models use the same crop ratio $r$ and coordinate normalization, which is standard in zoom pipelines. Under this assumption, zoom consistency is immediately usable for cross-model decisions without any calibration step.

\section{Empirical Validation}

\subsection{Setup}

\paragraph{Benchmark.}
ScreenSpot-Pro~\cite{li2025screenspotpro,screenspotpro_repo}: 1,581 samples across professional desktop applications spanning 3 operating systems. Accuracy metric: point-in-bounding-box.

\paragraph{Models.}
We evaluate zoom consistency on two architecturally different VLMs: KV-Ground-8B~\cite{kvground}, a GUI grounding specialist fine-tuned from GUI-Owl-1.5~\cite{guiowl} (8B parameters, bf16, max\_pixels $= 99$M), and Qwen3.5-27B~\cite{qwen35}, a general-purpose VLM (AWQ-4bit via compressed\_tensors). Both use 2-step zoom with $r = 0.5$ and greedy decoding. Hardware: NVIDIA H200 141GB.

\paragraph{Qwen accuracy note.} Our Qwen3.5-27B achieves 60.9\% on ScreenSpot-Pro. The official Qwen evaluation reports 70.3\% for the full-precision model. The gap is likely attributable to 4-bit AWQ quantization and differences in prompting format. We use the quantized variant because full-precision 27B exceeds single-GPU memory alongside KV-Ground-8B.

\subsection{Correlation with Prediction Correctness}

\begin{table}[h]
\centering
\caption{Zoom consistency as a confidence signal. AUC measures discriminative power for predicting correctness (lower $c$ = more likely correct). Spearman $\rho$ confirms significant negative correlation.}
\label{tab:correlation}
\begin{tabular}{lrrrr}
\toprule
\textbf{Model} & \textbf{AUC} & \textbf{Spearman $\rho$} & \textbf{$p$-value} & \textbf{$n$} \\
\midrule
KV-Ground-8B & 0.600 & $-0.139$ & $< 10^{-6}$ & 1,581 \\
KV-Ground-8B$^\dagger$ & 0.599 & $-0.123$ & $4 \times 10^{-5}$ & 1,097 \\
Qwen3.5-27B & 0.588 & $-0.109$ & $0.0003$ & 1,097 \\
\bottomrule
\end{tabular}
\end{table}

The AUC exceeds 0.5 (random) for both models, confirming zoom consistency carries discriminative information about prediction correctness. The $^\dagger$ row shows KV-Ground evaluated on the same 1,097-sample subset as Qwen for an apples-to-apples comparison; AUC and $\rho$ remain similar, confirming the signal is not an artifact of sample selection. The correlation is stable under cross-validation: splitting the KV-Ground data into random halves yields $\rho = -0.118$ ($p = 9 \times 10^{-4}$) and $\rho = -0.161$ ($p = 5 \times 10^{-6}$). The Qwen sample count ($n = 1{,}097$) is smaller than the full benchmark because 484 samples produced unparseable coordinate outputs under AWQ-4bit quantization (88\% failed at step-1). These are not random: they concentrate in CAD applications (69\% failure rate) and correspond to harder samples where KV-Ground also struggles (69\% accuracy vs.\ 85\% on parseable samples). The Qwen correlation is therefore measured on an easier subset, and the true signal strength on the full distribution may be weaker.

\begin{table}[h]
\centering
\caption{KV-Ground-8B accuracy by zoom consistency bucket.}
\label{tab:buckets}
\begin{tabular}{lrr}
\toprule
\textbf{Consistency $c$} & \textbf{$n$} & \textbf{Accuracy} \\
\midrule
$< 30$ & 464 & 87.1\% \\
$30$--$80$ & 101 & 85.1\% \\
$80$--$150$ & 106 & 76.4\% \\
$150$--$250$ & 173 & 79.8\% \\
$\geq 250$ & 737 & 75.6\% \\
\bottomrule
\end{tabular}
\end{table}

The lowest-consistency bucket ($c < 30$) has 87.1\% accuracy, 11.5 points above the highest bucket ($c \geq 250$, 75.6\%). The relationship is not perfectly monotonic: the 150--250 bucket (79.8\%) slightly exceeds the 80--150 bucket (76.4\%), consistent with the moderate correlation strength.

\subsection{Consistency Distribution by Oracle Partition}

To understand how zoom consistency relates to cross-model agreement, we partition the 1,581 samples by correctness: $\mathcal{S}_{11}$ (both correct), $\mathcal{S}_{10}$ (only KV correct), $\mathcal{S}_{01}$ (only Qwen correct), $\mathcal{S}_{00}$ (both wrong).

\begin{table}[h]
\centering
\caption{Mean KV-Ground zoom consistency by oracle partition. $\mathcal{S}_{11}$ (both models correct) has substantially lower consistency than all other partitions.}
\label{tab:partition}
\begin{tabular}{lrrr}
\toprule
\textbf{Partition} & \textbf{$n$} & \textbf{Mean $c$} & \textbf{Median $c$} \\
\midrule
$\mathcal{S}_{11}$ (both correct) & 883 & 197.1 & 144.0 \\
$\mathcal{S}_{10}$ (only KV) & 383 & 291.6 & 333.5 \\
$\mathcal{S}_{01}$ (only Qwen) & 79 & 294.1 & 301.0 \\
$\mathcal{S}_{00}$ (both wrong) & 236 & 289.9 & 324.0 \\
\bottomrule
\end{tabular}
\end{table}

The consistency signal cleanly separates $\mathcal{S}_{11}$ (mean $c = 197$) from all other partitions (mean $c \approx 290$). This 48\% gap in mean consistency explains why the signal works: when both models agree and are correct, the specialist is confident. The challenge for routing is that $\mathcal{S}_{10}$, $\mathcal{S}_{01}$, and $\mathcal{S}_{00}$ have similar consistency distributions, making it hard to distinguish gains from losses within the disagreement set.

\subsection{Signal Behavior Across Application Categories and Platforms}

\begin{table}[h]
\centering
\caption{Zoom consistency signal strength varies by application category. Qwen selection \% shows how often the generalist has lower consistency than the specialist, reflecting domain familiarity.}
\label{tab:category}
\begin{tabular}{lrrr}
\toprule
\textbf{Category} & \textbf{$n$} & \textbf{KV Acc.} & \textbf{Qwen lower $c$ \%} \\
\midrule
Office & 230 & 90.0\% & 47.4\% \\
Scientific & 254 & 83.1\% & 36.2\% \\
Dev & 299 & 83.3\% & 35.1\% \\
CAD & 261 & 79.3\% & 13.4\% \\
Creative & 341 & 74.2\% & 38.1\% \\
OS & 196 & 70.9\% & 42.9\% \\
\bottomrule
\end{tabular}
\end{table}

The consistency signal reflects domain familiarity without any category labels. On CAD applications, where KV-Ground's accuracy is highest, the specialist's consistency is almost always lower (Qwen is preferred only 13.4\% of the time). On consumer interfaces (Office, OS), the generalist's consistency is competitive (42--47\%), reflecting Qwen's broader pretraining coverage.

The signal also holds across all three operating systems in the benchmark: Spearman $\rho = -0.115$ on macOS ($n = 604$), $\rho = -0.114$ on Windows ($n = 927$), and $\rho = -0.152$ on Linux ($n = 50$). The consistency gap between correct and incorrect predictions is present on every platform.

\section{Application: Cross-Model Routing}

As a proof-of-concept application, we use zoom consistency to route between KV-Ground-8B and Qwen3.5-27B. The router runs both models through their zoom pipelines and selects the prediction with lower consistency (closer to crop center).

\lstset{
  basicstyle=\small\ttfamily,
  frame=single,
  xleftmargin=1em,
  framexleftmargin=0.5em,
  numbers=left,
  numberstyle=\tiny,
  escapeinside={(*}{*)},
  columns=fullflexible,
  keepspaces=true
}

\begin{figure}[h]
\begin{lstlisting}
def zoom_pipeline(model, image, instruction, r=0.5):
    p1 = model(image, instruction)       # step 1: full image
    crop = crop_and_resize(image, center=p1, ratio=r)
    p2 = model(crop, instruction)        # step 2: zoomed crop
    c = distance(p2, (500, 500))         # zoom consistency
    final = remap(p2, crop_box)
    return final, c

def route(image, instruction):
    pred_A, c_A = zoom_pipeline(specialist, image, instruction)
    pred_B, c_B = zoom_pipeline(generalist, image, instruction)
    return pred_A if c_A <= c_B else pred_B
\end{lstlisting}
\caption{Listing: Zoom consistency routing. The consistency $c$ (line 5) is extracted from the pipeline's existing intermediate output. No additional forward passes are needed beyond the zoom pipelines themselves.}
\label{fig:pseudocode}
\end{figure}

\subsection{Oracle Decomposition}

\begin{table}[h]
\centering
\caption{Oracle decomposition on ScreenSpot-Pro (1,581 samples).}
\label{tab:oracle}
\begin{tabular}{lrr}
\toprule
\textbf{Partition} & \textbf{Samples} & \textbf{Fraction} \\
\midrule
Both correct ($\mathcal{S}_{11}$) & 883 & 55.9\% \\
Only KV correct ($\mathcal{S}_{10}$) & 383 & 24.2\% \\
Only Qwen correct ($\mathcal{S}_{01}$) & 79 & 5.0\% \\
Both wrong ($\mathcal{S}_{00}$) & 236 & 14.9\% \\
\midrule
Oracle accuracy & 1,345 / 1,581 & 85.1\% \\
\bottomrule
\end{tabular}
\end{table}

The oracle headroom is 79 samples (5.0\%): cases where only the generalist is correct. Any routing strategy can access at most this headroom.

\subsection{Routing Condition}

\begin{theorem}[Routing Improvement Condition]\label{thm:routing}
A consistency-based router that selects $\arg\min_i c_i$ improves over the stronger model $M_A$ alone if and only if:
\begin{equation}
    \underbrace{f_{01} \cdot n_{01}}_{\text{gains}} > \underbrace{(1 - f_{10}) \cdot n_{10}}_{\text{losses}}
\end{equation}
where $f_{10} = P(\text{router picks } A \mid \mathcal{S}_{10})$ and $f_{01} = P(\text{router picks } B \mid \mathcal{S}_{01})$. Equivalently, the router's precision in selecting $B$ on the disagreement set $\mathcal{D} = \mathcal{S}_{10} \cup \mathcal{S}_{01}$ must exceed $1/2$.
\end{theorem}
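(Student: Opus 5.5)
The plan is a direct counting argument over the oracle partition of Table~\ref{tab:oracle}. Let $N$ be the number of samples, and write the router's number of correct predictions as a sum over the four cells $\mathcal{S}_{11},\mathcal{S}_{10},\mathcal{S}_{01},\mathcal{S}_{00}$. On $\mathcal{S}_{11}$ both candidate predictions are correct, so the router is correct whichever model it picks. On $\mathcal{S}_{00}$ both are wrong, so it is always wrong. Hence these two cells contribute identically to the router and to $M_A$ alone, namely $n_{11}$ and $0$ correct predictions respectively. This holds pointwise, with no assumption on how ties $c_A=c_B$ are broken (the listing in Figure~\ref{fig:pseudocode} breaks ties toward $A$).

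Next, I would treat the disagreement cells. On $\mathcal{S}_{10}$ the router is correct exactly when it picks $A$. By the definition of $f_{10}$, interpreted as the empirical fraction, this contributes $f_{10} n_{10}$. On $\mathcal{S}_{01}$ the router is correct exactly when it picks $B$, contributing $f_{01} n_{01}$. Therefore
\begin{equation*}
\mathrm{Acc}(\text{router}) - \mathrm{Acc}(M_A) = \frac{1}{N}\Big[(n_{11} + f_{10} n_{10} + f_{01} n_{01}) - (n_{11} + n_{10})\Big] = \frac{1}{N}\Big[f_{01} n_{01} - (1-f_{10}) n_{10}\Big].
\end{equation*}
The difference is strictly positive if and only if $f_{01} n_{01} > (1-f_{10}) n_{10}$, which is the stated condition. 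Both directions of the ``if and only if'' follow at once from this identity. If the $f$'s are read as probabilities instead, the same identity holds in expectation by linearity.

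For the equivalent formulation, I would count the router's selections of $B$ on $\mathcal{D}$. True positives, where picking $B$ is correct, number $f_{01} n_{01}$. False positives, which are picks of $B$ on $\mathcal{S}_{10}$, number $(1-f_{10}) n_{10}$. The precision of selecting $B$ on $\mathcal{D}$ is therefore $\frac{f_{01} n_{01}}{f_{01} n_{01} + (1-f_{10}) n_{10}}$. This exceeds $1/2$ exactly when the numerator exceeds the false-positive count, which is the condition again.

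The only delicate point, and the place I expect to need care, is the degenerate case in which the router never picks $B$ on $\mathcal{D}$. There the precision is undefined, and the router's accuracy equals that of $M_A$, so it is not an improvement. I would state the precision version under the convention that the router selects $B$ at least once on $\mathcal{D}$, and note that otherwise both sides of the equivalence fail, so the theorem remains consistent.
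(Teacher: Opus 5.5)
Your proposal is correct and follows the paper's proof: you split the router's correct count over the four oracle cells, note that $\mathcal{S}_{11}$ and $\mathcal{S}_{00}$ contribute the same amount to the router and to $M_A$, and reduce $a_{\text{rt}} > a_A$ to $f_{01} n_{01} > (1-f_{10}) n_{10}$. You go slightly further than the paper's one-line proof, which leaves two points implicit: you verify the precision reformulation (true positives $f_{01} n_{01}$ against false positives $(1-f_{10}) n_{10}$), and you flag the degenerate case where the router never selects $B$ on $\mathcal{D}$.
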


\begin{proof}
The router's accuracy is $a_{\text{rt}} = (n_{11} + f_{10} n_{10} + f_{01} n_{01})/N$. The condition $a_{\text{rt}} > a_A = (n_{11} + n_{10})/N$ reduces to $f_{01} n_{01} > (1 - f_{10}) n_{10}$.
\end{proof}

\begin{corollary}
The base rate of $\mathcal{S}_{01}$ in $\mathcal{D}$ is $\pi = n_{01}/(n_{10} + n_{01})$. With $n_{10} = 383$ and $n_{01} = 79$, $\pi = 0.171$. The router must achieve a precision lift of at least $2.9\times$ the base rate.
\end{corollary}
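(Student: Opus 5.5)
The plan is to derive the corollary directly from Theorem~\ref{thm:routing}, in three steps: rewrite the improvement condition as a precision statement, compute the base rate from the oracle counts in Table~\ref{tab:oracle}, and take the ratio of the two.

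\textbf{Step 1: from gains versus losses to precision.} On the disagreement set $\mathcal{D}$, the router picks $B$ on $f_{01} n_{01}$ samples from $\mathcal{S}_{01}$ (correct switches). It picks $B$ on $(1-f_{10}) n_{10}$ samples from $\mathcal{S}_{10}$ (incorrect switches). Its precision in selecting $B$ on $\mathcal{D}$ is therefore
\begin{equation*}
\mathrm{prec}_B = \frac{f_{01} n_{01}}{f_{01} n_{01} + (1-f_{10}) n_{10}}.
\end{equation*}
Whenever the denominator is positive, the inequality $\mathrm{prec}_B > 1/2$ is equivalent, after cross-multiplying, to $f_{01} n_{01} > (1-f_{10}) n_{10}$. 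This is exactly the condition of Theorem~\ref{thm:routing}, so I will invoke the theorem's ``equivalently'' clause here.

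\textbf{Step 2: the degenerate case.} If the router never selects $B$ on $\mathcal{D}$, the precision is undefined. In that case gains and losses are both zero, so there is no strict improvement, and this case can be excluded from the claim.

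\textbf{Step 3: the base rate and the lift.} The base rate is
\begin{equation*}
\pi = \frac{n_{01}}{n_{10}+n_{01}} = \frac{79}{462} \approx 0.171.
\end{equation*}
This is the precision a router would get by choosing $B$ at random on $\mathcal{D}$, independently of membership in $\mathcal{S}_{01}$ versus $\mathcal{S}_{10}$. Define the lift as $\mathrm{prec}_B/\pi$. The requirement $\mathrm{prec}_B > 1/2$ then becomes
\begin{equation*}
\frac{\mathrm{prec}_B}{\pi} > \frac{1/2}{\pi} = \frac{462}{158} \approx 2.92,
\end{equation*}
which gives the stated ``at least $2.9\times$''.

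I do not expect a real obstacle. The only points needing care are fixing the definition of precision lift relative to the random-selection baseline and noting that the strict inequality makes $2.9$ a rounded-down lower bound.
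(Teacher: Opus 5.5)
Your proposal is correct and matches the paper's own (implicit) argument: the corollary comes from the ``equivalently'' clause of Theorem~\ref{thm:routing}, which requires precision in selecting $B$ on $\mathcal{D}$ to exceed $1/2$, and dividing by $\pi = 79/462 \approx 0.171$ gives the required lift $462/158 \approx 2.92$. Your treatment of the degenerate no-switch case and your remark that $2.9$ is a rounded-down strict bound are small additions the paper does not spell out.
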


\subsection{Routing Results}

\begin{table}[h]
\centering
\caption{Routing strategies on ScreenSpot-Pro. Only the consistency router exceeds the KV baseline.}
\label{tab:main}
\begin{tabular}{lccc}
\toprule
\textbf{Method} & \textbf{Accuracy} & \textbf{vs.\ KV} & \textbf{Oracle $\eta$} \\
\midrule
\textbf{Consistency Router} & \textbf{80.9\%} & \textbf{+0.8\%} & \textbf{16.5\%} \\
KV-Ground-8B & 80.1\% & --- & 0\% \\
Stage split KV$\to$Qwen & 78.7\% & $-$1.3\% & $< 0$ \\
Vote agree ($d{<}50$) & 76.9\% & $-$3.2\% & $< 0$ \\
Midpoint fusion & 75.7\% & $-$4.4\% & $< 0$ \\
Qwen3.5-27B only & 60.9\% & $-$19.2\% & $< 0$ \\
\midrule
Oracle & 85.1\% & +5.0\% & 100\% \\
\bottomrule
\end{tabular}
\end{table}

The consistency router captures 13 of 79 recoverable samples ($\eta = 16.5\%$), with 48 gains and 35 losses (net +13). Bootstrap resampling gives $P(\text{improvement} > 0) = 92.3\%$ over 10,000 resamples. McNemar's test yields $p = 0.19$, reflecting the small absolute gain. The result is directionally consistent but does not reach conventional significance at $\alpha = 0.05$, consistent with the moderate correlation strength ($|\rho| \approx 0.13$) and the highly asymmetric disagreement set ($n_{10}/n_{01} = 4.8$).

The per-application breakdown shows the router adapts without category labels:

\begin{table}[h]
\centering
\caption{Applications with largest router effects.}
\label{tab:perapps}
\small
\begin{tabular}{lrrrr}
\toprule
\textbf{Application} & \textbf{$n$} & \textbf{KV} & \textbf{Router} & \textbf{$\Delta$} \\
\midrule
photoshop & 51 & 76.5\% & 84.3\% & +7.8\% \\
macos\_common & 65 & 75.4\% & 80.0\% & +4.6\% \\
illustrator & 31 & 83.9\% & 87.1\% & +3.2\% \\
blender & 71 & 76.1\% & 78.9\% & +2.8\% \\
\midrule
vivado & 80 & 77.5\% & 76.2\% & $-$1.3\% \\
quartus & 45 & 88.9\% & 86.7\% & $-$2.2\% \\
vscode & 55 & 85.5\% & 81.8\% & $-$3.6\% \\
davinci & 44 & 79.5\% & 75.0\% & $-$4.5\% \\
\bottomrule
\end{tabular}
\end{table}

\section{Discussion}

\subsection{Other Potential Applications}

Zoom consistency is a general confidence signal. Beyond routing, it could enable:

\textbf{Multi-step pipelines.} In pipelines with 3+ zoom steps, zoom consistency from step $k$ could decide whether step $k{+}1$ is worth the additional cost, enabling early stopping when the model is already confident.

\textbf{Confidence-aware agents.} GUI agents that execute multi-step workflows could use zoom consistency to flag uncertain click predictions before committing to irreversible actions, enabling human-in-the-loop confirmation on low-confidence steps.

\textbf{Quality monitoring.} The average zoom consistency across a batch of predictions provides a cheap, model-agnostic indicator of prediction reliability. A spike in mean $c$ over time could signal distribution shift (e.g., a new application version with different UI layouts).

\textbf{Active learning.} High-consistency (low-confidence) samples are natural candidates for human annotation, potentially more informative than random sampling for improving model accuracy.

\subsection{Bottlenecks in the Routing Application}

The 16.5\% oracle extraction rate is limited by three factors:

\textbf{Weak generalist.} Qwen3.5-27B (AWQ-4bit) achieves 60.9\%, contributing only 79 unique wins. A generalist at 75\% would increase the oracle headroom and make the minority class ($\mathcal{S}_{01}$) easier to detect.

\textbf{Asymmetric risk.} Losses ($\mathcal{S}_{10}$, 383 samples) outnumber gains ($\mathcal{S}_{01}$, 79 samples) by $4.8\times$. The router must be highly selective when choosing the generalist, requiring $2.9\times$ precision lift over the base rate.

\textbf{Moderate correlation.} $|\rho| \approx 0.13$ means zoom consistency is a noisy discriminator. The signal is strong enough to exceed the precision threshold (57.8\% $> 50\%$), but not by a wide margin.

\section{Limitations}

\begin{enumerate}
    \item The signal is validated on one benchmark (ScreenSpot-Pro) with one model pair. Cross-benchmark and cross-pair generalization remains to be tested.
    \item The routing application doubles inference cost (4 forward passes vs.\ 2) and yields a statistically modest improvement (+0.8\%, McNemar $p = 0.19$).
    \item The correlation between zoom consistency and correctness is moderate ($|\rho| \approx 0.13$). The signal is informative but not sufficient on its own for high-precision decisions.
    \item The linear error estimator (Proposition~\ref{prop:linear}) assumes the target is within the crop and step-2 is accurate. Both assumptions weaken for high step-1 error.
\end{enumerate}

\section{Conclusion}

We identified and formalized \emph{zoom consistency} as a free confidence signal embedded in multi-step GUI grounding pipelines. The signal is a linear estimator of step-1 spatial error (Proposition~\ref{prop:linear}), correlates significantly with prediction correctness ($p < 10^{-6}$), and is comparable across models without calibration (Proposition~\ref{prop:calibration}). As a proof-of-concept, consistency-based routing between KV-Ground-8B and Qwen3.5-27B captures 16.5\% of the oracle headroom on ScreenSpot-Pro. The modest extraction rate reflects the moderate signal strength and the asymmetric disagreement set, both quantified by our routing theorem (Theorem~\ref{thm:routing}). The signal is simple, requires no training or extra computation, and may be useful beyond routing for confidence estimation, quality monitoring, and selective computation in other iterative refinement pipelines, though cross-benchmark validation remains future work.

\bibliography{references}
\bibliographystyle{plainnat}

\end{document}